\theoremstyle{plain}
\newtheorem{proposition}{Proposition}
\theoremstyle{definition}
\newtheorem{definition}{Definition}
\newtheorem{problem}{Problem}
\theoremstyle{remark}
\newcommand{\Abs}[1]{\left|#1\right|}
\newcommand{\Set}[1]{\left\{#1\right\}}
\newcommand{\Oh}[1]{\mathcal{O}(#1)}
\newcommand{\HVP}{\textsc{HVP}}
\newcommand{\HSP}{\textsc{HSP}}
\newcommand{\phase}{\textsc{Phase}}
\newcommand{\fastphase}{fast\textsc{Phase}}
\newcommand{\haplorec}{\textsc{HaploRec}}
\newcommand{\gerbil}{\textsc{Gerbil}}
\newcommand{\spamm}{\textsc{SpaMM}}
\newcommand{\hit}{\textsc{HIT}}
\newcommand{\phases}{\textsc{P}}
\newcommand{\fastphases}{f\textsc{P}}
\newcommand{\haplorecs}{\textsc{HR}}
\newcommand{\gerbils}{\textsc{G}}
\newcommand{\spamms}{\textsc{S}}
\newcommand{\hits}{\textsc{HIT}}
\newcommand{\hap}{\textsc{HAP}}
\newcommand{\hint}{\textsc{HINT}}
\newcommand{\gevalt}{\textsc{Gevalt}}
\newcommand{\trio}{\textsc{trio}}
\newcommand{\argminx}[1]{\underset{#1}{\mathop{argmin}}}
\newcommand{\minx}[1]{\underset{#1}{\mathop{min}}}
\newcommand{\argmaxx}[1]{\underset{#1}{\mathop{argmax}}}
\newcommand{\argmax}{\mathop{argmax}}
\newcommand{\argmin}{\mathop{argmin}}
\begin{document}

\title{Combining haplotypers}
\titlelinebreaks{Combining haplotypers}

\author{Matti K{\"a}{\"a}ri{\"a}inen
\and
Niels Landwehr
\and
Sampsa Lappalainen
\and
Taneli Mielik{\"a}inen}

\authorcontact{Matti K{\"a}{\"a}ri{\"a}inen \\
HIIT Basic Research Unit \\
Department of Computer Science\\
University of Helsinki \\
\texttt{matti.kaariainen@cs.helsinki.fi} \\
\ \\
\and
Niels Landwehr\\
Machine Learning Lab\\
Department of Computer Science \\
Katholieke Universiteit Leuven\\
\texttt{niels.landwehr@cs.kuleuven.be} \\
\ \\
\and
Sampsa Lappalainen\\
HIIT Basic Research Unit \\
Department of Computer Science\\
University of Helsinki \\
\texttt{sampsa.lappalainen@cs.helsinki.fi} \\
\ \\
\and
Taneli Mielik{\"a}inen\\
Department of Computer Science\\
University of Helsinki and \\
Nokia Research Center Palo Alto \\
\texttt{taneli.mielikainen@iki.fi} \\
\ \\
}

\reportyear{2007}
\reportno{57}
\reportmonth{September}
\CRClasses{F.2.2, I.2.6, J.3}
\printhouse{}
\reportpages{\pageref{contentsendpage} + \pageref{lastpage}}
\newpage

\CRClassesLong{\newlength{\restline}
\setlength{\restline}{\textwidth}
\addtolength{\restline}{-13mm}
\parbox[t]{13mm}{F.2.2}
\vspace{1ex}
\parbox[t]{\restline}{Analysis of Algorithms and Problem Complexity: Nonnumerical Algorithms and Problems}
\parbox[t]{13mm}{I.2.6}
\parbox[t]{\restline}{Artificial Intelligence: Learning}
\parbox[t]{13mm}{J.3}
\parbox[t]{\restline}{Life and Medical Sciences: Biology and Genetics}
}               

\GeneralTerms{Algorithms, Experimentation, Theory}
\AdditionalKeyWords{Haplotyping, Ensemble Methods, Sequence Prediction}


%
%

\pagestyle{empty}

\makecover



\cleardoublepage
\abstractpagestart
Statistically resolving the underlying haplotype pair for a genotype
measurement is an important intermediate step in gene mapping studies,
and has received much attention recently.  Consequently, a variety of
methods for this problem have been developed.  Different methods
employ different statistical models, and thus implicitly encode
different assumptions about the nature of the underlying haplotype
structure.  Depending on the population sample in question, their
relative performance can vary greatly, and it is unclear which method
to choose for a particular sample.  Instead of choosing a single
method, we explore combining predictions returned by different
methods in a principled way, and thereby circumvent the problem of
method selection.

We propose several techniques for combining haplotype reconstructions
and analyze their computational properties.  In an experimental study
on real-world haplotype data we show that such techniques can provide
more accurate and robust reconstructions, and are useful for outlier 
detection.  Typically, the combined prediction is at least as accurate as 
or even more accurate than the best individual method, effectively 
circumventing the method selection problem.

\abstractpageend

\label{contentsendpage}


%
%

\newpage
\pagenumbering{arabic}
\setcounter{page}{1}
\thispagestyle{plain}

\renewcommand{\sectionmark}[1]{%
\markright{\sectionname
\ \thechapter.\ #1}{}%
}

\pagestyle{headings}


\section{Introduction}
Complex diseases such as Diabetes or Alzheimer's disease are often linked 
to individual genetic variations.  The analysis of genetic variation in
human populations is therefore critical for understanding individual
risk factors for such diseases.  Most of the human genome is invariant
among individuals, and it is sufficient to concentrate on small parts
of the whole genome sequence to analyze genetic variation.  Frequently
studied differences are \emph{single nucleotide polymorphisms} (SNPs),
which are single-nucleotide variations at a particular location in the
genome.  The positions in the sequence are called \emph{markers} and
the different possible values \emph{alleles}.  
A \emph{haplotype} is a sequence of SNP alleles along a chromosome, and
concisely represent the variable genetic information in that region.
In the search for DNA sequence variants which are related to common
diseases, haplotype-based approaches have become a central
theme~\cite{article/nature/TIHC06}.

Diploid human cells have two \emph{homologous} (i.e., almost
identical) copies of each chromosome.  Current practical laboratory
measurement techniques produce a \emph{genotype}---for $m$ markers, a
sequence of $m$ unordered pairs of alleles. A genotype reveals the two
alleles that are present at each marker, but not their respective
chromosome origin.  To obtain haplotypes from genotype data, this
hidden phase information has to be reconstructed.  Two alternative
approaches exist: If family trios are available, most of the
ambiguity in the haplotype pair can be resolved analytically.
Otherwise, population-based statistical methods have to be used to
estimate the haplotype pair.  Because trios are more difficult to
recruit and more expensive to genotype, the population-based
haplotyping approach is often the only cost-effective method for
large-scale studies.

The haplotyping problem has received a lot of attention recently, and
many different haplotyping methods have been
proposed~\cite{article/ajhg/StephensS05,article/ajhg/ScheetS06,
article/jcb/KimmelS05,article/bmc/EronenGT06,inproc/wabi/RastasKMU05}.
All of these methods employ different statistical models, which
reflect
different assumptions about the underlying
distribution over haplotypes in a population sample. Furthermore, the
methods offer different trade-offs in terms of reconstruction accuracy
and scaling behavior in the number of markers and individuals in the
sample.  On the other hand, the statistical properties of haplotype
``datasets'' (a particular set of markers genotyped for a particular set
of individuals) vary depending on marker spacing, sample size and
population characteristics.  In fact, some haplotyping methods have
been specifically tailored to particular dataset characteristics. For
example, the HIT system~\cite{inproc/wabi/RastasKMU05} is especially
effective for population isolates, and the HaploRec
system~\cite{article/bmc/EronenGT06} for reconstruction of large,
possibly genome-wide marker maps.
 
It is therefore unlikely that there is one haplotyping method which is
generally superior.  Instead, the relative performance of different
methods will vary depending on the characteristics of the dataset to
be haplotyped.  In contrast to other statistical modeling tasks, in
haplotyping there is typically no ``training data'' available for
which the ground truth is known.  This precludes the use of model
selection techniques such as cross-validation (although it is possible 
to use cross-validation estimates of performance on related tasks such 
as missing genotype imputation for model selection, 
see e.g.~\cite{article/ajhg/ScheetS06}).
Nevertheless, one often has to commit to just one haplotype
reconstruction in the end.  Hence, it is natural to ask whether the
predictions of the different methods could be combined in a simple way
to give more accurate and robust haplotype reconstructions without
having to know in advance which of the baseline methods performs well
on the dataset at hand.

In this paper we study how to combine haplotype reconstructions
produced by various methods.
We formulate several approaches 
for combining haplotypers, 
study the algorithmics of the 
problem, and experimentally validate that combining haplotypers
is beneficial.

\section{Population-based haplotyping}

A haplotype $h$ can be represented as a sequence of alleles $h[i]$ in markers $i=1,\ldots,m$. 
For most SNP markers, only two alternative nucleotides (alleles) occur in a population, so we can assume
$h \in \{0,1\}^m$.  
A genotype $g$ for an individual can be represented as a sequence of unordered pairs
$g[i]=\{h^1_g[i],h^2_g[i]\}$ of alleles in markers $i=1,\ldots,m$.
Hence, $g \in \{\{0,0\},\{1,1\},\{0,1\}\}^m$.  
A marker with alleles $\{0,0\}$ or $\{1,1\}$ is \emph{homozygous} whereas a marker with
alleles $\{0,1\}$ is \emph{heterozygous}.  
We denote the number of heterozygous markers by $m'$, and their positions in the haplotype
sequence by $i_1,\ldots,i_{m'}$.  

The haplotyping problem arises from the fact that while each haplotype
pair corresponds to a unique genotype, a genotype may correspond to a large
number of different haplotype pairs.
Population-based haplotyping is the task of statistically resolving this ambiguity: 
\begin{trivlist}
\item[\textbf{The haplotype reconstruction problem:$\quad$}] Given a
multiset $\mathcal{G}$ of genotypes, find for each genotype $g \in
\mathcal{G}$ the haplotypes $h^1_g$ and $h^2_g$ that have generated
$g$.
\end{trivlist}

For the rest of the paper we will denote the two individual 
haplotypes in a haplotype pair as $h^1$ and $h^2$, and use
$h$ as a shorthand to denote the pair $\{h^1,h^2\}$ when there is no ambiguity.
Furthermore, we denote a substring $s[i]s[i+1]\ldots s[i+k]$ of a string $s$ by $s[i,k]$.


For each genotype $g \in \{0,1\}^m$, there are $2^{m'-1}$ different
haplotype reconstructions.  Only one of these reconstructions is
correct, so inferring the haplotypes is clearly impossible without
additional information or assumptions.  These assumptions are
typically inspired by population genetics, and can take either a
combinatorial or a probabilistic form.  The models borrowed from
population genetics are often rather simplistic abstractions of the
complicated reality.  Furthermore, additional simplifications and
heuristics may be needed to make haplotype inference computationally
tractable.  The number of ways to combine these choices---which of the
imperfect population genetics models to build on and which
computational strategies to use---has lead to the development of a
large and diverse set of different haplotyping methods, each with
their own advantages.  The following lists just a few prominent
examples.

The currently most widely used method
\phase~\cite{article/ajhg/StephensSD01,article/ajhg/StephensD03,article/ajhg/StephensS05}
is based on quite sophisticated probabilistic models and is
computationally expensive; \fastphase~\cite{article/ajhg/ScheetS06}, a
more efficient but still almost as accurate method has been published
recently.  
Several other methods have recently been developed.
\gerbil~\cite{article/jcb/KimmelS05,article/pnas/KimmelS05}
is based on reconstructing block partitioning and resolving the
haplotypes simultaneously.
\hap~\cite{article/bioinformatics/HalperinE04} implements a method
based on imperfect phylogeny.  \hit~\cite{inproc/wabi/RastasKMU05} and
\hint~\cite{article/jbcb/KimmelS05} use HMM founder models for
haplotyping.  \haplorec~\cite{inproc/psb/EronenGT04,article/bmc/EronenGT06}
is based on variable-length Markov chains.
\spamm~\cite{inproc/pmsb/LandwehrMETM06,article/bmc/LandwehrMETM07} is an approach based on
levelwise construction of constrained Hidden Markov Models.

\section{Combining haplotypers}
\begin{comment}
In practice, genetics researchers often face the problem that
different haplotype reconstruction methods give different results and
there is no straightforward way to decide which method to choose.  In
fact, no method is the best for all populations and all individuals in
a population.
\end{comment}

In practice, genetics researchers often face the problem that
different haplotype reconstruction methods give different results and
there is no straightforward way to decide which method to choose.
Due to the varying characteristics of haplotyping datasets, it is 
unlikely that one haplotyping method is generally superior.
\begin{comment}
Different methods have different strengths and fail in different parts
of the reconstruction.  Intuitively, the multiple haplotype
reconstructions of an individual could then correct each other's
mistakes: each reconstruction might have errors here and there, but
the majority should still be correct.  This intuition can be made
precise by making probabilistic assumptions on how the reconstruction
methods err: If the errors in the reconstructions were small random
perturbations of the correct haplotype reconstruction, then taking a
majority vote (in an appropriate sense depending on the type of
perturbations) of sufficiently many reconstructions would with high
probability correct all the errors.  While such probabilistic
assumptions are not true in practice, they serve as a guideline and
motivation for the combination methods we derive next.
\end{comment}
Instead, different methods have different relative strengths and weaknesses, 
and will fail in different parts of the reconstruction.  

The promise of ensemble methods lies in ``averaging out'' those
errors, as far as they are specific to a small subset of methods 
(rather than a systematic error affecting all methods).
This intuition can be made precise by making probabilistic 
assumptions about how the reconstruction methods err:
If the errors in the reconstructions were small random 
perturbations of the true haplotype pair, taking a majority 
vote (in an appropriate sense depending on the type of 
perturbations) of sufficiently many reconstructions would with 
high probability correct all the errors.
While such probabilistic assumptions are not true in practice, 
they serve as a guideline and motivation for the combination 
methods we derive next.

\begin{comment}
Haplotypers have been combined already before.  The motivation has
been to combine results from several runs of a randomized haplotyper
to a single
answer~\cite{article/ajhg/ScheetS06,inproc/recomb/Gusfield02a}, or to
obtain a point estimate from the inferred posterior distribution on
haplotypes~\cite{article/ajhg/StephensSD01}.  However, to our
knowledge, our approach of combining unrelated haplotypers---and thus
possibly gaining the benefits of their potentially orthogonal
strengths---has not been studied before.
\end{comment}

The idea of using ensemble methods in haplotyping is not entirely new.
It is used in existing systems 
for combining results from several random restarts of a 
method~\cite{article/ajhg/ScheetS06,inproc/recomb/Gusfield02a}, or to
obtain a point estimate from an inferred posterior distribution on
haplotypes~\cite{article/ajhg/StephensSD01}.  
However, to the best of our
knowledge, our approach of combining unrelated haplotypers---and thus
gaining the benefits of their potentially orthogonal
strengths---has not been studied before.

The haplotyper combination problem can be viewed as an instance of the
general problem of finding a consensus object for a given collection
of objects.  In the simplest case the objects are individual
predictions as in ensemble methods in machine
learning~\cite{breiman96bagging,collins02adaboost,strehl02clusterensembles}.
The objects can also be more complicated structures such as
sequences~\cite{JiaoXL04,LyngsoP02,SimP03},
rankings~\cite{dwork01rank,fagin04aggregatingrankings,freund03rankboost},
clusterings~\cite{ailon05aggregating,gionis05clustering,lange05combiningpartitions},
or segmentations~\cite{inproc/kdd/MielikainenTT06}.  Although
sequential prediction has been studied a lot, there exists little work
on ensemble methods for sequence prediction.  Our approach to
haplotyper combination resembles closely the work on combining
part-of-speech taggers~\cite{Sjobergh03} to improve tagging accuracy.
However, due to the nature of haplotype data, we need more refined
strategies than simple position-wise voting.

\subsection{Problem definitions}
To combine the haplotypings suggested by $l$ given baseline haplotype
reconstruction methods, we formulate two computational problems.  We
limit ourselves to combination methods that process each individual
separately, thus enabling immediate parallelization of the combination
strategies for large populations.
\begin{problem}[Haplotyper combination]
Given the haplotype reconstructions
$\{h^1_1,h^2_1\},\ldots,\{h^1_l,h^2_l\} \subseteq \{0,1\}^m$, and a 
distance function 
$d : \{0,1\}^m \times \{0,1\}^m \to \mathbb{R}_{\geq 0}$, find:
\begin{itemize}
\item 
\HVP:\quad a reconstruction \mbox{ $\{h^1,h^2\} \subseteq \{0,1\}^m$ } minimizing the sum of distances, i.e., find
$$\{h^1,h^2\}= \argminx{ h^1_i,h^2_i\in\{0,1\}  } \sum_{i=1}^l d(\{h^1_i,h^2_i\},\{h^1,h^2\}).$$
\item 
\HSP:\quad a reconstruction \mbox{ $\{h^1_i,h^2_i\}, i\in \Set{1,\ldots,l}$ }
minimizing the sum of distances, i.e., find
$$i = \argminx{j\in \Set{1,\ldots,l}}
\sum_{i=1}^l d(\{h^1_i,h^2_i\},\{h^1_j,h^2_j\}).$$
\end{itemize}
In both cases, ties are broken arbitrarily.
\end{problem}

The difference between the Haplotyper Voting Problem (\HVP) and the
Haplotyper Selection Problem (\HSP) is that in the latter, the
solution is required to be one of the input haplotype reconstructions.
Using clustering terminology, the Haplotyper Voting Problem (\HVP)
seeks for the average haplotype reconstruction based on the input
haplotypings, whereas the Haplotyper Selection Problem (\HSP) selects
the median haplotype reconstruction as the most plausible haplotyping.
The exact meaning of average and median depends, of course, on the
properties of $d$.  Ideally, $d$ should be such that the solutions to
\HVP\ and \HSP\ can be found efficiently and are close to the unknown
true haplotypes.  We will discuss viable candidates for such $d$ later
in Section~\ref{sec:distance}.  While \HSP\ can be solved efficiently by 
brute force provided that $d$ can be computed efficiently, the computational 
aspects of \HVP\ depend heavily on $d$.  Thus, their discussion will be postponed 
to Section~\ref{sec:algo}.

\HVP\ and \HSP\ are closely related for all distance functions $d$.  A 
solution to \HSP\ is a $2$-approximation of a solution to \HVP\ and the 
solution to \HVP\ can be transformed into a $2$-approximation of a 
solution to \HSP.

\begin{proposition} \label{p:approx}
Let $d$ be a distance function between haplotype pairs satisfying the
triangle inequality.  Let \mbox{$h_1=\{h^1_1,h^2_1\}, \ldots,
h_l=\{h^1_l,h^2_l\}$} be the haplotype pairs to combine and
$h_\HVP=\{h^1_\HVP,h^2_\HVP\}$, \mbox{$h_\HSP=\{h^1_\HSP,h^2_\HSP\}$} the
optimal \HVP\ and \HSP\ solutions.  Then
\begin{enumerate}
\item
$h_\HVP$ is a feasible solution of \HVP\ and
$$\sum_{i=1}^l d(h_i,h_\HVP) \leq \sum_{i=1}^l d(h_i,h_\HSP) \leq 2\sum_{i=1}^l d(h_i,h_\HVP).$$
\item
$h_j=\argmin_{i=1,\ldots,l} d(h_i,h_\HVP)$ is a feasible solution of
\HSP\ and 
$$\sum_{i=1}^l d(h_i,h_\HSP) \leq 
\sum_{i=1}^l d(h_i,h_j) \leq 
2\sum_{i=1}^l d(h_i,h_\HSP).$$
\end{enumerate}
\end{proposition}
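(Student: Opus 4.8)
The plan is to derive the whole statement from the triangle inequality for $d$, treating the feasibility claims and the two left-hand inequalities as immediate consequences of the definitions, and concentrating the real effort on the two factor-$2$ upper bounds, both of which reduce to a single averaging estimate.

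First I would dispatch the easy parts. Feasibility is definitional: $h_\HVP$ ranges over all pairs and is therefore trivially feasible for \HVP, while $h_j=\argmin_{i} d(h_i,h_\HVP)$ is one of the inputs and hence feasible for \HSP. The left inequality of part~1, $\sum_{i=1}^l d(h_i,h_\HVP) \leq \sum_{i=1}^l d(h_i,h_\HSP)$, holds because $h_\HVP$ minimizes $\sum_i d(h_i,\cdot)$ over \emph{all} pairs whereas $h_\HSP$ is merely one particular pair; the left inequality of part~2 holds by the analogous reasoning, since $h_\HSP$ minimizes $\sum_i d(h_i,\cdot)$ over the inputs and $h_j$ is one of them.

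The heart of the argument is a ``minimum-is-at-most-average'' step. For the pair $h_j=\argmin_{i} d(h_i,h_\HVP)$, the triangle inequality gives $d(h_i,h_j)\leq d(h_i,h_\HVP)+d(h_\HVP,h_j)$ for each $i$, and summing over $i$ yields $\sum_i d(h_i,h_j) \leq \sum_i d(h_i,h_\HVP) + l\, d(h_\HVP,h_j)$. Since $h_j$ is the input closest to $h_\HVP$, its distance to $h_\HVP$ is at most the average such distance, so $l\, d(h_\HVP,h_j) \leq \sum_i d(h_i,h_\HVP)$, collapsing the bound to $\sum_i d(h_i,h_j) \leq 2\sum_i d(h_i,h_\HVP)$. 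Both remaining inequalities then follow: for part~1, optimality of $h_\HSP$ among the inputs gives $\sum_i d(h_i,h_\HSP) \leq \sum_i d(h_i,h_j) \leq 2\sum_i d(h_i,h_\HVP)$; for part~2, chaining with the already-established $\sum_i d(h_i,h_\HVP)\leq\sum_i d(h_i,h_\HSP)$ gives $\sum_i d(h_i,h_j)\leq 2\sum_i d(h_i,h_\HVP)\leq 2\sum_i d(h_i,h_\HSP)$.

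The only delicate point, and hence the main obstacle to watch, is the averaging step: it bounds $d(h_\HVP,h_j)$, yet $h_j$ is defined as the minimizer of $d(h_i,h_\HVP)$, so identifying the ``closest input'' across both orientations requires $d$ to be symmetric. As $d$ is called a distance function, I would take symmetry for granted, as is standard, under which the two orientations coincide and the estimate closes; I would flag this assumption explicitly rather than leave it implicit.
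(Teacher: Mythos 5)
Your proposal is correct and follows essentially the same route as the paper: the left-hand inequalities come from the respective optimalities, and both factor-$2$ bounds reduce to the triangle inequality through $h_\HVP$ combined with the observation that the input closest to $h_\HVP$ is within the average distance $\frac{1}{l}\sum_{i=1}^l d(h_i,h_\HVP)$. Your explicit flagging of the symmetry of $d$ is a reasonable addition but does not change the argument, which the paper carries out identically (with the same implicit assumption).
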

\begin{proof}
We have $\sum_{i=1}^l d(h_i,h_\HVP) \leq \sum_{i=1}^l d(h_i,h_\HSP)$
because
\begin{eqnarray*}
\sum_{i=1}^l d(h_i,h_\HVP)&=&\min_{h^1,h^2 \in \{0,1\}^m} \sum_{i=1}^l d(h_i,h)\\
&\leq& \min_{j \in \{1,\ldots,l\}} \sum_{i=1}^l d(h_i,h_j)\\
&=& \sum_{i=1}^l d(h_i,h_\HSP).
\end{eqnarray*}

To see that $\sum_{i=1}^l d(h_i,h_\HSP) \leq 2\sum_{i=1}^l
d(h_i,h_\HVP)$, note that there must be a haplotype pair
$h_j=\{h^1_j,h^2_j\}, j \in \{1,\ldots,l\}$ such that
$$d(h_j,h_\HVP)
\leq \frac{1}{l}\sum_{i=1}^l d(h_i,h_\HVP).$$  Hence, 
$$\sum_{i=1}^l d(h_i,h_j) 
\leq 
\sum_{i=1}^l d(h_i,h_\HVP) + l d(h_j,h_\HVP)
\leq  2\sum_{i=1}^l d(h_i,h_\HVP).$$

Similarly $\sum_{i=1}^l d(h_i,h_\HSP) \leq \sum_{i=1}^l d(h_i,h_j)$ because
$$\sum_{i=1}^l d(h_i,h_\HSP)=\min_{j \in \{1,\ldots,l\}} \sum_{i=1}^l
d(h_i,h_{j}) \leq \sum_{i=1}^l d(h_i,h_{j'})$$ for any $j' \in
\{1,\ldots,l\}$.

To see that $\sum_{i=1}^l d(h_i,h_j) \leq 2 \sum_{i=1}^l
d(h_i,h_\HSP)$, note that
$$\min_{j \in\{1,\ldots,l\}} d(h_j,h_\HVP) \leq \frac{1}{l}\sum_{i=1}^l d(h_i,h_\HVP).$$
Hence, 
\begin{eqnarray*}
\sum_{i=1}^l d(h_i,h_j) 
&\leq& \sum_{i=1}^l d(h_i,h_\HSP) +
\sum_{i=1}^l d(h_j,h_\HVP) \\
&=&
\sum_{i=1}^l d(h_i,h_\HSP) + l d(h_j,h_\HVP) \\
&\leq&
\sum_{i=1}^l d(h_i,h_\HSP) + \sum_{i=1}^l d(h_i,h_\HVP) \\
&\leq&
2\sum_{i=1}^l d(h_i,h_\HSP).
\end{eqnarray*}
\end{proof}

\subsection{Distance functions}
\label{sec:distance}

In order to define average and median haplotypings, we need to choose
a distance function $d$ for measuring the similarity between haplotype
sequences.  
To satisfy the intuition that the
solutions to \HSP\ and \HVP\ should be on average close to the baseline
haplotype reconstructions, we will focus only on a small set of 
distance measures $d$ that are reasonable candidates for measuring 
genetic distance between haplotype pairs.

\paragraph{Hamming distance and other distances induced by distances on sequences.}
The most common distance measure between sequences $s,t \in \Sigma^m$
is the Hamming distance that counts the number of disagreements
between $s$ and $t$: 
\begin{displaymath}
d_H(s,t)=\Abs{\Set{i \in \{1,\ldots,m\} : s[i]\neq t[i]}}.
\end{displaymath}
The Hamming distance is not directly applicable 
as a measure of genetic distance between individuals,
because the  haplotypes corresponding to an individual's genotype form an
unordered pair.  
To define a Hamming distance between unordered pairs of haplotypes,
let us consider haplotype pairs $\{h^1_1,h^2_1\}$ and $\{h^1_2,h^2_2\}$.  
The distance between the pairs should be zero if
the sets $\{h^1_1,h^2_1\}$ and $\{h^1_2,h^2_2\}$ are the same.  Hence,
we should try both ways to pair the haplotypes and take the one with
the smaller distance, i.e.,
\begin{align*}
d_H(\{h^1_1,h^2_1\},\{h^1_2,h^2_2\}) &= \\
&\hspace{-33mm}\min\hspace{-1mm}\Set{d_H(h^1_1,h^1_2)+d_H(h^2_1,h^2_2),d_H(h^1_1,h^2_2)+d_H(h^2_1,h^1_2)}.
\end{align*}
Note that a similar construction can be used to map any distance
function between haplotype sequences to a distance function between
pairs of haplotypings.  Furthermore, the next proposition shows that
if the distance function between the sequences satisfies the triangle 
inequality, so does the corresponding distance function for haplotype 
reconstructions.

\begin{proposition} \label{p:singletopair}
Let \mbox{$d \colon \Sigma^m \times \Sigma^m \to \mathbb{R}_{\geq 0}$} be a
distance function between sequences of length $\Sigma^m$ and let
$$d(\{h^1_1,h^2_1\},\{h^1_2,h^2_2\})=\min
\{d(h^1_1,h^1_2)+d(h^2_1,h^2_2),d(h^1_1,h^2_2)+d(h^2_1,h^1_2)\}$$ for all
\mbox{$h^1_1,h^2_1,h^1_2,h^2_2 \in \Sigma^m$}.
If $d$ satisfies the triangle inequality for comparing sequences,
i.e., $d(s,t) \leq d(s,u)+d(t,u)$ for all $s,t,u \in \Sigma^m$, then
$d$ satisfies the triangle inequality for comparing unordered pairs of
sequences $d(h_1,h_2) \leq d(h_1,h_3)+d(h_2,h_3)$ for all
$h^1_1,h^2_1,h^1_2,h^2_2,h^1_3,h^2_3 \in \Sigma^m$.
\end{proposition}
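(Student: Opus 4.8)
The plan is to exploit the fact that the pair distance is \emph{defined} as a minimum over the two ways of matching the haplotypes. To upper bound $d(h_1,h_2)$ it therefore suffices to exhibit a single matching between the haplotypes of $h_1=\{h^1_1,h^2_1\}$ and $h_2=\{h^1_2,h^2_2\}$ whose cost is at most $d(h_1,h_3)+d(h_2,h_3)$; we never need to identify the optimal $h_1$--$h_2$ matching. The natural candidate is the matching obtained by routing $h_1$ to $h_2$ through the intermediate pair $h_3=\{h^1_3,h^2_3\}$, composing the two matchings that realize $d(h_1,h_3)$ and $d(h_2,h_3)$.

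First I would fix those realizing matchings. Since $d(h_1,h_3)$ is the smaller of the two sums, there is a permutation $\pi$ of $\{1,2\}$ with $d(h_1,h_3)=d(h^1_1,h^{\pi(1)}_3)+d(h^2_1,h^{\pi(2)}_3)$, and likewise a permutation $\sigma$ of $\{1,2\}$ with $d(h_2,h_3)=d(h^1_2,h^{\sigma(1)}_3)+d(h^2_2,h^{\sigma(2)}_3)$. Composing through $h_3$ gives $\rho=\sigma^{-1}\circ\pi$, which pairs $h^a_1$ with $h^{\rho(a)}_2$, both being matched to the same haplotype $h^{\pi(a)}_3$ of $h_3$ (note $\sigma(\rho(a))=\pi(a)$). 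Next I would apply the sequence triangle inequality for each $a\in\{1,2\}$ with intermediate sequence $h^{\pi(a)}_3$, giving $d(h^a_1,h^{\rho(a)}_2)\le d(h^a_1,h^{\pi(a)}_3)+d(h^{\rho(a)}_2,h^{\pi(a)}_3)$. Summing over $a$, the first group of terms is exactly $d(h_1,h_3)$; and since $\rho$ is a bijection of $\{1,2\}$ and $\pi(a)=\sigma(\rho(a))$, substituting $b=\rho(a)$ reindexes the second group to $\sum_b d(h^b_2,h^{\sigma(b)}_3)=d(h_2,h_3)$.

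Finally, because $d(h_1,h_2)$ is the minimum over the two admissible matchings and $\rho$ is one of them, I conclude $d(h_1,h_2)\le\sum_a d(h^a_1,h^{\rho(a)}_2)\le d(h_1,h_3)+d(h_2,h_3)$, which is the claim. There is no genuine obstacle here: the only point needing care is that composing two matchings yields an admissible matching of $h_1$ against $h_2$, which holds automatically since the permutations of a two-element set are closed under composition, so $\rho$ is again either the identity or the swap. I expect the reindexing step of the second sum, and the observation that the ``$\min$'' in the definition lets us bound $d(h_1,h_2)$ by \emph{any} single matching, to be the only places where the argument could be stated carelessly.
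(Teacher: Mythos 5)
Your proposal is correct and is essentially the paper's own argument: both proofs route the comparison through $h_3$, apply the sequence triangle inequality once per component, and use the $\min$ in the definition to conclude that the composed matching upper-bounds $d(h_1,h_2)$. The only difference is presentational --- the paper normalizes two of the three pair distances by a without-loss-of-generality relabeling and then splits into two cases on which matching realizes $d(h_2,h_3)$, whereas you package the same case analysis uniformly as the composition $\rho=\sigma^{-1}\circ\pi$ of the two realizing permutations.
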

\begin{proof} 
Choose arbitrary sequences $h^1_1,h^2_1,h^1_2,h^2_2,h^1_3,h^2_3 \in
\Sigma^m$.  We show that the claim holds for them and hence for all
sequences of length $m$ over the alphabet $\Sigma$.  

Assume, without loss of generality, that
$d(\{h^1_1,h^2_1\},\{h^1_2,h^2_2\})=d(h^1_1,h^1_2)+d(h^2_1,h^2_2)$ and
$d(\{h^1_1,h^2_1\},\{h^1_3,h^2_3\})=d(h^1_1,h^1_3)+d(h^2_1,h^2_3).$

For $d(\{h^1_2,h^2_2\},\{h^1_3,h^2_3\})$ there are two cases as it is
the minimum of $d(h^1_2,h^1_3)+d(h^2_2,h^2_3)$ and
$d(h^2_2,h^1_3)+d(h^1_2,h^2_3)$.

If $d(\{h^1_2,h^2_2\},\{h^1_3,h^2_3\})=d(h^1_2,h^1_3)+d(h^2_2,h^2_3)$, then
\begin{eqnarray*}
&&d(\{h^1_1,h^2_1\},\{h^1_3,h^2_3\})+d(\{h^1_2,h^2_2\},\{h^1_3,h^2_3\})\\
&=&
d(h^1_1,h^1_3)+d(h^2_1,h^2_3)+d(h^1_2,h^1_3)+d(h^2_2,h^2_3) \\
&=&
\left[d(h^1_1,h^1_3)+d(h^1_2,h^1_3)\right]+\left[d(h^2_1,h^2_3)+d(h^2_2,h^2_3)\right] \\
&\geq&
d(h^1_1,h^1_2)+d(h^2_1,h^2_2).
\end{eqnarray*}

If $d(\{h^1_2,h^2_2\},\{h^1_3,h^2_3\})=d(h^2_2,h^1_3)+d(h^1_2,h^2_3)$, then
\begin{eqnarray*}
&&d(\{h^1_1,h^2_1\},\{h^1_3,h^2_3\})+d(\{h^1_2,h^2_2\},\{h^1_3,h^2_3\})\\
&=&
d(h^1_1,h^1_3)+d(h^2_1,h^2_3)+d(h^2_2,h^1_3)+d(h^1_2,h^2_3) \\
&=&
\left[d(h^1_1,h^1_3)+d(h^2_2,h^1_3)\right]+\left[d(h^2_1,h^2_3)+d(h^1_2,h^2_3)\right] \\
&\geq&
d(h^1_1,h^2_2)+d(h^2_1,h^1_2) \geq
d(h^1_1,h^1_2)+d(h^2_1,h^2_2).
\end{eqnarray*}

Thus, the claim holds.
\end{proof}


\paragraph{Switch distance.}
The approach of defining distance functions between haplotype
pairs based on distance functions between haplotypes has some
limitations, independently of the distance function used. 
This is because much of the variance in haplotypes 
originates from \emph{chromosomal crossover} during meiosis, which breaks up  
chromosomes and reconnects the resulting segments
to form new chromosomes for the offspring. 
The chromosome pair resulting from a crossover could be seen 
as genetically close to the original 
pair even if the  
individual sequences do not match very well.
\emph{Switch distance} is a
distance measure for haplotype pairs that takes such similarities into account.
It is defined as the number of \emph{switches} that are needed to transform a haplotype
pair to another haplotype pair with the same homozygous and
heterozygous markers.  
A switch between markers $i$ and $i+1$ for a
haplotype pair $\{h^1,h^2\}$ transforms
the pair $\{h^1,h^2\}=\{h^1[1,i]h^1[i+1,m],h^2[1,i]h^2[i+1,m]\}$ into the pair
$\{h^1[1,i]h^2[i+1,m],h^2[1,i]h^1[i+1,m]\}$.  It is easy to see that
for any pair of haplotype reconstructions corresponding to the same
genotype, there is a sequence of switches transforming one into the
other.  Thus, switch distance is well defined for the
cases we are interested in.

The switch distance has the advantage over the Hamming distance that
the order of the haplotypes in the haplotype pair does not matter in
the distance computation: the haplotype pair can be encoded uniquely
as a bit sequence consisting of just the switches between the
consecutive heterozygous markers, i.e., as a \emph{switch sequence}:
\begin{definition}[Switch sequence]
Let $h^1,h^2 \in \{0,1\}^m$ and let $i_1 < \ldots < i_{m'}$ be the
heterozygous markers in $\{h^1,h^2\}$.  The switch sequence of a
haplotype pair $\{h^1,h^2\}$ is a sequence $s(h^1,h^2)=s(h^2,h^1)=s
\in \{0,1\}^{m'-1}$ such that
\begin{displaymath}
s[j]=\left\{
\begin{array}{ll}
0 \quad & \mbox{if } h^1[i_j]=h^1[i_{j+1}] \mbox{ and } h^2[i_j]=h^2[i_{j+1}] \\
1 \quad & \mbox{if } h^1[i_j]\neq h^1[i_{j+1}] \mbox{ and } h^2[i_j]\neq h^2[i_{j+1}]
\end{array}
\right.
\end{displaymath}
\end{definition}
The switch distance between haplotype reconstructions can be defined
in terms of the Hamming distance between switch sequences as follows.
\begin{definition}[Switch distance]
Let $h_1 = \{h^1_1,h^2_1\}$ and $h_2 = \{h^1_2,h^2_2\}$ be haplotype pairs corresponding
to the same genotype.
The switch distance between the pairs is
\mbox{$d_s(h_1,h_2) 
=d_H(s(h^1_1,h^2_1),s(h^1_2,h^2_2))$}.
\end{definition}

As switch distance is the Hamming distance between the switch
sequences, the following proposition is immediate:
\begin{proposition}
The switch distance satisfies the triangle inequality.
\end{proposition}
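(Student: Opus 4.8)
The plan is to exploit the fact that, by definition, $d_s$ is nothing but the Hamming distance $d_H$ pulled back through the switch-sequence map $\{h^1,h^2\} \mapsto s(h^1,h^2)$. Any distance obtained this way automatically inherits the triangle inequality from the underlying metric, so the whole argument reduces to recalling that $d_H$ is a metric on $\{0,1\}^{m'-1}$ and checking that the pullback is well defined, i.e., that the switch sequences in question all live in the same space.

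First I would fix three haplotype pairs $h_1,h_2,h_3$ corresponding to a common genotype and abbreviate their switch sequences as $s_1 = s(h^1_1,h^2_1)$, $s_2 = s(h^1_2,h^2_2)$, and $s_3 = s(h^1_3,h^2_3)$. Because the three pairs share a genotype, they share the heterozygous positions $i_1 < \cdots < i_{m'}$, so each $s_k$ is an element of $\{0,1\}^{m'-1}$. By the definition of switch distance, $d_s(h_a,h_b) = d_H(s_a,s_b)$ for all $a,b \in \{1,2,3\}$.

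Second I would invoke the triangle inequality for the Hamming distance on $\{0,1\}^{m'-1}$. If a quick justification is wanted, it follows coordinatewise: for each $j$ the indicator of $s_1[j] \neq s_2[j]$ is bounded by the sum of the indicators of $s_1[j] \neq s_3[j]$ and $s_3[j] \neq s_2[j]$, and summing over $j = 1,\ldots,m'-1$ gives $d_H(s_1,s_2) \leq d_H(s_1,s_3) + d_H(s_3,s_2)$. Substituting the identities from the previous step yields $d_s(h_1,h_2) \leq d_s(h_1,h_3) + d_s(h_3,h_2)$, which is exactly the claim.

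There is no genuinely hard step here, consistent with the paper calling the result immediate. The only point that deserves a moment's attention is the domain issue flagged above: switch distance is defined only for haplotype pairs over the same genotype, so before combining the three terms one must confirm that $h_1,h_2,h_3$ indeed share a genotype and hence yield switch sequences of the same length $m'-1$. Once this is observed, the metric property of the Hamming distance transfers without further work.
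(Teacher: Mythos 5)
Your proof is correct and follows exactly the route the paper takes: the paper states the proposition is immediate because switch distance is the Hamming distance between switch sequences, and you simply spell out that pullback argument (plus the well-definedness check that the pairs share a genotype, which is a reasonable extra precaution). No discrepancy to report.
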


\paragraph{$k$-Hamming distance.}
Switch distance considers only a very small neighborhood of each
marker, namely only the previous and the next heterozygous marker in
the haplotype.  On the other extreme, the Hamming distance uses the
complete neighborhood (via the $\min$ operation), i.e., the whole
haplotypes for each marker.  The intermediate cases are covered by the
following $k$-Hamming distance in which all windows of a chosen length
$k \in \{2,\ldots,m\}$ are considered.  The intuition behind the
definition is that each window of length $k$ is a potential location
for a gene, and we want to measure how close the haplotype
reconstruction $\{h^1,h^2\}$ gets to the true haplotype $\{h^1_2,h^2_2\}$
in predicting each of these potential genes.

\begin{definition}[$k$-Hamming distance]
Let $\{h^1_1,h^2_1\}$ and $\{h^1_2,h^2_2\}$ be pairs of haplotype sequences
corresponding to the same genotype with $m'$ heterozygous markers in
positions $i_1,\ldots,i_m$.  The $k$-Hamming distance $d_{k-H}$
between $\{h^1_1,h^2_1\}$ and $\{h^1_2,h^2_2\}$ is defined by
\begin{eqnarray*}
d_{k-H}(h_1,h_2)=\sum_{j=1}^{m'-k+1}  d_H(h_1[i_j,\ldots, i_{j+k-1}],h_2[i_j,\ldots,i_{j+k-1}])
\end{eqnarray*}
unless $m'<k$, in which case
  $d_{k-H}(h_1,h_2) = d_H(h_1,h_2)$.
\end{definition}
It is easy to see that $d_{2-H} = 2d_S$, and that for haplotyping pairs
with $m'$ heterozygous markers, we have $d_{m'-H} = d_{m-H} = d_H$.
Thus, the switch distance and the Hamming distance are the two extreme
cases between which $d_{k-H}$ interpolates for $k=2,\ldots,m$.

\subsection{Algorithms and complexity}
\label{sec:algo}
The \HSP\ problem is easily solved by trying out each of the $l$
reconstructions as a candidate solution, and choosing the best one.
The complexity of this straightforward strategy is roughly
$l^2$ times the time needed to evaluate the distance function $d$.
Thus, there seems to be no need for more efficient algorithms for
\HSP\ in practice.

The complexity of \HVP\ depends on $d$ in a more involved way.  As we
will show next, for $d=d_S$ a simple voting scheme gives the solution.
The rest of the distances considered in Section~\ref{sec:distance} are
more challenging.  If $d=d_{k-H}$ and $k$ is small, the solution can
be found by dynamic programming.  For $d=d_{k-H}$ with large $k$ and
$d=d_H$, we are aware of no efficient general solutions.  However, we
will outline methods that can solve most of the problem instances that
one may encounter in practice.

\paragraph{Switch distance: $d=d_S$.}
For the switch distance, the solution to \HVP\ can be found by the following
voting scheme:
\begin{itemize}
\item
  Transform the haplotype reconstructions 
  \mbox{$\{h^1_i,h^2_i\} \subseteq \{0,1\}^m$}, \mbox{$i=1,\ldots,l$} 
  into switch sequences \mbox{$s_1,\ldots,s_l \in \{0,1\}^{m'-1}$}.
\item
  Return the pair $\{h^1,h^2\}$ that shares the homozygous markers with
  the reconstructions $\{h^1_i,h^2_i\}$ and whose switch sequence 
  $s \in \{0,1\}^{m'-1}$ is defined by 
  \mbox{$s[j] = \argmaxx{b \in \{0,1\}} \Abs{\Set{j \in \{1,\ldots,m'-1\} : s_i[j]=b}}.$}
\end{itemize} 
The time complexity of this method is $O(lm)$.

\paragraph{$k$-Hamming distance: $d=d_{k-H}$}.
\label{s:hammingcomputation}
The optimal solution $h_\HVP = \{h^1_\HVP,h^2_\HVP\}$ of \HVP\ is given by 
$$h_\HVP = \argminx{\{h^1,h^2\} \subseteq \{0,1\}^m}\sum_{i=1}^l d_{k-H}(h_i,h).$$ 

The number of potentially optimal solutions is $2^{m'}$, but the
solution can be constructed incrementally based on the following
observation:
\begin{align*}
h_\HVP &= \argminx{\{h^1,h^2\}}\sum_{i=1}^l d_{k-H}(h_i,h) \\
&=\argminx{\{h^1,h^2\}}\sum_{i=1}^l \sum_{j=1}^{m'-k+1} d_H(h_i[i_j,...,i_{j+k-1}], h[i_j,...,i_{j+k-1}])
\end{align*}
Hence, the cost of any solution is a sum of terms
$$D_j(\{x,\bar{x}\})=\sum_{i=1}^l
d_H(h_i[i_j,\ldots,i_{j+k-1}],\{x,\bar{x}\}),$$
$j=1,\ldots,m'-k+1$,
$x \in \{0,1\}^{k}$ and $\bar{x}$ denotes the complement of $x$.
There are $(m'-k+1)2^{k-1}$ such terms.
Furthermore,
the cost of the optimal solution can be computed by
dynamic programming using the recurrence relation
$$T_j(\{x,\bar{x}\}) \hspace{-0.5mm}=\hspace{-0.5mm}
\left\{\hspace{-1mm}
\begin{array}{ll}
0 & \hspace{-1mm}\mbox{if } j=0 \\
D_j(\{x,\bar{x}\}) + \hspace{-1.5mm}\minx{b \in \{0,1\}}T_{j-1}(\{bx,\overline{bx}\}) & \mbox{\hspace{-1mm}if } j>0
\end{array}
\right.$$
Namely, the cost of the optimal solution is 
$\min_{x \in \{0,1\}^k} T_{m'}(\{x,\bar{x}\})$ and the optimal
solution itself can be reconstructed by backtracking the path that
leads to this position.  
The total time complexity for 
finding the optimal solution using dynamic programming is
$\Oh{lm+2^kkl(m'-k)}$: the heterozygous markers can be detected and
the data can be projected onto them in time $\Oh{lm}$, and the optimal
haplotype reconstruction for the projected data can be computed in
time $\Oh{2^kkl(m'-k)}$. So the problem is fixed-parameter
tractable\footnote{A problem is called fixed-parameter tractable in a
parameter $k$, if the running time of the algorithm is $f(k)\Oh{n^c}$
where $k$ is some parameter of the input and $c$ is a constant (and
hence not depending on $k$.)  For a good introduction to
fixed-parameter tractability and parameterized complexity,
see~\cite{Flum06}.} in $k$.

\paragraph{Hamming distance: $d=d_H$.}  
An ordering $(h^1,h^2)$ of an optimal solution $\{h^1,h^2\}$ to \HVP\
with Hamming distance determines an ordering of the unordered input
haplotype pairs $\{h^1_1,h^2_1\},\ldots,\{h^1_l,h^2_l\}$.  This
ordering can be represented by a binary vector $o = (o_1,\ldots,o_l)
\in \{0,1\}^l$ that states for each $i = 1,\ldots,l$ that the ordering
of $\{h^1_i,h^2_i\}$ is $(h^{1+o_i}_i,h^{2-o_i}_i)$.  Thus,
$o_i=\argmin_{b \in \{0,1\}} d_H(h^1,h^{1+b}_i)$, where ties are
broken arbitrarily.

If the ordering $o$ is known and $l$ is odd, the optimal haplotype
reconstruction can be determined in time $\Oh{lm}$ using the formulae
\begin{equation}
\label{eq:ball1}
h^1[i]=\argmax_{b \in \{0,1\}} 
=\Abs{\Set{j \in \{1,\ldots,l\} : h^{1+o_j}_j[i]=b}}
\end{equation}
and 
\begin{equation}
\label{eq:ball2}
h^2[i]=\argmax_{b \in \{0,1\}} 
=\Abs{\Set{j \in \{1,\ldots,l\} : h^{2-o_j}_j[i]=b}}.
\end{equation}
Hence, solving \HVP\ is polynomial-time equivalent to the task of
determining the ordering vector $o$ corresponding to the best haplotype
reconstruction $\{h^1,h^2\}$.

The straightforward way to find the optimal ordering is to evaluate
the quality of each of the $2^{l-1}$ non-equivalent orderings.  The
quality of a single ordering can be evaluated in time $\Oh{lm}$.
Hence, the \HVP\ problem can be solved in total time
$\Oh{lm+2^{l}lm'}$.  The runtime can be reduced to $\Oh{lm+2^{l}m'}$
by using Gray codes~\cite{Savage97} to enumerate all bit vectors $o$
in such order that consecutive bit vectors differ only by one bit.
Hence, the problem is fixed-parameter tractable in $l$ (i.e., in the
number of methods).
If $l$ is large, however, a more clever strategy is needed.  We are
unaware of a tractable efficient general solution and suspect that
\HVP\ for $d=d_H$ is NP-complete in general.  However, we have
efficient solutions to two special cases of practical relevance:

\noindent\emph{Small number of heterozygous markers.}  If the number
of heterozygous positions $m'$ is small, we can simply enumerate all
the $2^{m'-1}$ non-equivalent possible solutions to the problem, and
pick the optimal one from among them.  The time complexity of this
approach is $O(2^{m'} lm')$.  Thus, the problem is fixed-parameter
tractable also in $m'$ (the number of heterozygous markers).

\noindent\emph{All reconstructions close to the optimal solution for
\HVP.}  Fixing an ordering to any one of the input haplotype
reconstructions $\{h^1_i,h^2_i\}$ induces an ordering to the remaining
input haplotypes.  This ordering can be used to compute a solution to
\HVP\ through equations~\ref{eq:ball1} and~\ref{eq:ball2}.  The next
proposition shows that the solution obtained in this way is provably optimal
if all input haplotype reconstructions are within $m'/2$ of the
optimal solution $\{h^1_\HVP,h^2_\HVP\}$ of \HVP.
\begin{proposition}
If $d_H(\{h^1_i,h^2_i\},\{h^1_\HVP,h^2_\HVP\})<m'/2$ for each $i \in
\{1,\ldots,l\}$, then the ordering induced by any of the input
haplotype pairs is equivalent to the ordering corresponding to the
optimal solution to \HVP.
\end{proposition}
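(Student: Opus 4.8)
The plan is to restrict attention to the $m'$ heterozygous markers, since at every homozygous marker all haplotypes of every valid reconstruction of the common genotype carry the same allele and therefore contribute nothing to any Hamming distance. Writing each haplotype's restriction to the heterozygous markers as a binary string of length $m'$, the two strands of any reconstruction become a string and its complement, because the two alleles at a heterozygous marker are always $0$ and $1$. So I would encode pair $h_i$ as the unordered pair $\{w_i,\bar w_i\}$, where $w_i$ is $h^1_i$ restricted to the heterozygous markers, and encode the optimal solution as $\{v,\bar v\}$ with $v = h^1_\HVP$ so restricted. An orientation of a pair is then just a choice of one of its two representatives.

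First I would rewrite the pair Hamming distance in this encoding. Setting $k_i = d_H(w_i,v)$, each heterozygous marker contributes $2$ to exactly one of the two orientation sums and $0$ to the other, so $d_H(h_i,h_\HVP) = \min(2k_i,\,2(m'-k_i)) = 2\min(k_i,\,m'-k_i)$. The hypothesis $d_H(h_i,h_\HVP) < m'/2$ thus gives $\min(k_i,\,m'-k_i) < m'/4$. Consequently the representative $r_i \in \{w_i,\bar w_i\}$ closest to $v$ is uniquely determined (no tie is possible), satisfies $d_H(r_i,v) < m'/4$, and is exactly the orientation of pair $i$ selected by the optimal solution, since the optimal ordering picks for each $i$ the strand closest to $h^1_\HVP$.

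Next I would apply the triangle inequality for the Hamming distance on sequences to the aligned representatives: for any indices $i,j$ we get $d_H(r_i,r_j) \le d_H(r_i,v) + d_H(v,r_j) < m'/4 + m'/4 = m'/2$, and hence $d_H(\bar r_i,r_j) = m' - d_H(r_i,r_j) > m'/2$. Now fix any input pair $h_k$ as reference. The ordering it induces orients each pair $i$ by choosing the representative closest to the chosen representative of $h_k$. If $h_k$ is oriented as $r_k$, then for every $i$ the representative closest to $r_k$ is $r_i$, because $d_H(r_i,r_k) < m'/2 < d_H(\bar r_i,r_k)$; so the induced orientations coincide with the optimal ones. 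If instead $h_k$ is oriented as $\bar r_k$, the identical computation returns $\bar r_i$ for every $i$, i.e.\ the global complement of the optimal ordering.

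Finally I would observe that an ordering and its global complement yield the same unordered reconstruction through equations~\ref{eq:ball1} and~\ref{eq:ball2} (complementing every orientation merely swaps the two majority votes), so the two are equivalent. Since the reference pair's own orientation is the only free choice and both choices produce an ordering equivalent to the optimal one, the claim follows. The main obstacle is chasing the constant correctly: the factor $2$ relating the pair distance to the representative distance is what converts the hypothesis $<m'/2$ into the bound $d_H(r_i,v) < m'/4$, and only this strict bound makes the triangle-inequality step force $d_H(r_i,r_k) < m'/2$ and thereby pin down every induced orientation.
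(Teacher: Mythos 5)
Your proof is correct and follows essentially the same route as the paper's: derive the bound $d_H(r_i,v)<m'/4$ for the correctly oriented strands, apply the triangle inequality to get $d_H(r_i,r_j)<m'/2$ for any two of them, and conclude that any reference pair induces the optimal ordering or its global complement, which are equivalent. You merely spell out the steps the paper leaves implicit, in particular the factor-of-two reduction from the pair distance to the single-strand distance via the complement structure on heterozygous markers.
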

\begin{proof}
By assumption, 
$d_H(h^{1+o_i}_i,h^1_\HVP) < m'/4$ for each $i=1,\ldots,l$ for one of the choices $o_i \in \{0,1\}$.  Then 
$$d_H(h^{1+o_i}_i,h^{1+o_j}_j) \leq d_H(h^{1+o_i}_i,h^1_\HVP) +  d_H(h^{1+o_j}_j,h^1_\HVP) < m'/4+ m'/4 = m'/2.$$

Thus, if we use $(h^{1+o_i}_i,h^{2-o_i}_i)$ as a reference point, the
induced ordering for the haplotypes will be the same $o$ that is
induced by using $(h^1_\HVP,h^2_\HVP)$ as a reference point.
Switching the ordering in the reference point to
$(h^{2-o_i}_i,h^{1+o_i}_i)$ will induce the equivalent ordering $1-o$.
\end{proof}

\section{Experiments}
To investigate the haplotype reconstruction combination problem
empirically, real-world genotype data was phased with different
haplotyping systems and their reconstructions evaluated.
The data was obtained from three sources:  a collection 
of datasets from the \textbf{Yoruba} population in Ibadan, 
Nigeria~\cite{article/nature/TIHC06}, the well-known dataset 
derived from a European population of \textbf{Daly} et 
al.~\cite{article/naturegenetics/DalyRSHL01}, and samples from the 
recently published D-\textbf{HaploDB} haplotype 
database~\cite{article/nar/HigasaMKTH07} derived from a Japanese population.
For the Yoruba and Daly data, true haplotype pairs were inferred 
from family trios. Furthermore, the nontransmitted parental chromosomes of each trio were
combined to form additional artificial haplotype pairs. 
For the HaploDB dataset, definite haplotypes were
determined from complete hydatidiform moles (CHMs).
The $74$ available CHMs haplotypes were paired to form $37$ diploid individuals. 

For all datasets markers with minor allele frequency of less 
than $5\%$ and genotypes with more than $15\%$ missing values were removed.
For the Yoruba population, information on $3.8$ million SNPs spread 
over the whole genome is available.  We sampled $100$ sets of 
$100$ markers each from distinct regions on 
chromosome $1$.  There are $60$ individuals 
in these datasets after preprocessing as described above, with an average 
fraction of missing values of $3.6\%$ and $32.2\%$ heterozygous markers.  
For the Daly dataset, there is information on $103$ markers and $174$ 
individuals available after data preprocessing, the average fraction of 
missing values is $7.9\%$ and the average fraction of heterozygous markers 
is $30.6\%$. In HaploDB, a genome-wide set of $281\,439$ SNP markers 
is available, from which we sampled  $100$ sets of 
$100$ markers each from distinct regions on chromosome $1$.
The average fraction of missing values is
$3.1\%$ and the fraction of heterozygous markers is $39.9\%$.
All datasets were phased with each of the following $6$ publicly
available haplotyping systems, yielding $6$ different reconstructed
haplotype pairs for every genotype: \phase\ version
2.1.1.~\cite{article/ajhg/StephensS05}, \fastphase\ version
1.1.~\cite{article/ajhg/ScheetS06}, \gerbil\ as included in \gevalt\
version 1.0.~\cite{article/jcb/KimmelS05},
\hit~\cite{inproc/wabi/RastasKMU05}, \haplorec\ version
2.0.~\cite{article/bmc/EronenGT06} and \spamm\ version
1.0.~\cite{inproc/pmsb/LandwehrMETM06}.  All methods were run using
their default parameters.

\begin{table}
\centering
\caption{Switch (top-right triangle) and Hamming (bottom-left triangle) 
distances between the truth and the baseline methods for the Daly 
dataset. \label{t:dalybase}}
\begin{tabular}{@{}r|rrrrrrr@{}}
\toprule
& \trio & \fastphases & \hits & \spamms & \haplorecs & \gerbils & \phases \\
\midrule
\trio        & - & 105 & 121 & 127 & 131 & 132 & 145 \\
\fastphases  & 480 & - & 82 & 82 & 104 & 85 & 118 \\
\hits        & 514 & 414 & - & 88 & 116 & 103 & 146 \\
\spamms      & 510 & 434 & 508 & - & 118 & 117 & 122 \\
\haplorecs   & 736 & 676 & 784 & 716 & - & 119 & 150 \\
\gerbils     & 568 & 478 & 522 & 546 & 810 & - & 143 \\
\phases      & 654 & 590 & 728 & 650 & 850 & 718 & - \\
\bottomrule
\end{tabular}
\begin{comment}
\medskip

\begin{tabular}{@{}r|rrrrrrr@{}}
\toprule
& \trio & \fastphases & \hits & \spamms & \haplorecs & \gerbils & \phases \\
\midrule
\trio & 
     - & 480 & 514 & 510 & 736 & 568 & 654 \\
\fastphases & 
   480 &   - & 414 & 434 & 676 & 478 & 590 \\
\hits & 
   514 & 414 &   - & 508 & 784 & 522 & 728 \\
\spamms& 
   510 & 434 & 508 &   - & 716 & 546 & 650 \\
\haplorecs & 
   736 & 676 & 784 & 716 &   - & 810 & 850 \\
\gerbils &
   568 & 478 & 522 & 546 & 810 &   - & 718 \\
\phases &
   654 & 590 & 728 & 650 & 850 & 718 &   - \\
\bottomrule
\end{tabular}
\end{comment}
\end{table}

Let us first consider how the reconstructions produced by the baseline
methods differ on the Daly dataset.  Table~\ref{t:dalybase} shows the
switch and Hamming distances between the different haplotype
reconstructions, including the reconstructions inferred from the
family trios (\trio) as the ground truth, and the methods \fastphase\
(\fastphases), \hit\ (\hits), \spamm\ (\spamms), \haplorec\
(\haplorecs), \gerbil\ (\gerbils), and \phase\ (\phases).
The \fastphase\ system  clearly has the smallest 
reconstruction error with respect to switch and Hamming distances on 
the Daly dataset.  While
the accuracy performance of the other methods is worse, the distances
between all the methods are of the same order of magnitude.  
This indicates that it makes sense to try to combine the haplotypers.  

We tested the haplotyper selection and voting techniques using the
Daly, Yoruba and HaploDB datasets. 
As it is not clear which combination of selection/voting and internal 
distance measure 
(switch distance, Hamming distance, $k$-Hamming distance) yields 
best results, systematic experiments using all different combinations 
were performed. 
The quality of the resulting reconstructions is measured by switch distance
only, as this is the standard way of measuring the quality of 
reconstructions in haplotyping experiments.

The main goals of the experimental study are as follows.  First, the
goal is to evaluate whether the simple combination approaches like
selection and voting can be used to find a more robust solution when
the best-performing method is not known.  Second, the goal is to see
whether the combination methods improve over the baseline methods when
using different subsets of the baseline methods.  For this purpose, we
consider leaving out one of the baseline methods \phase\ (the most
accurate on the Yoruba and HaploDB datasets on average and on the
HaploDB dataset), \fastphase\ (most accurate on the Daly dataset), and
\gerbil\ (slow and least accurate on all datasets), and also leaving
out all three of them simultaneously.  The results using these subsets
are representative of results for other subsets we experimented with
but do not report on here.  Third, the goal is to find out how the
haplotyper selection results compare to haplotyper voting results and
how the different distance functions affect the quality of the
solutions.

The results for the baseline methods are summarized in Table~\ref{t:baseerrors}
and results for the combination methods in Table~\ref{t:selectcomberrors} and Table~\ref{t:votecomberrors}.
Let us first consider the Daly dataset.  The best baseline 
method is \fastphase, resulting in $105$ switch errors.  The selection 
and voting methods applied to the set of all baseline methods produce 
results comparable to \fastphase, and are consistently better than the 
haplotype reconstructions produced by any other baseline method.  Thus, 
by employing the haplotyper combination approach, we can achieve performance 
comparable to the best baseline method without having to know which of 
the baseline methods is best in advance.  Leaving out one of the methods 
\phase, \fastphase, or \gerbil\ has no significant effects on the results 
of the combination methods.  Thus, the combination methods seem to be 
quite robust against small perturbations of the set of baseline methods, 
even if they lead to the exclusion of the best performing method.  If \phase,
\fastphase, and \gerbil\ are left out simultaneously, the results
degrade below the level of \fastphase, but are still better than those
of any other method.

\begin{table}
\centering
\caption{The total switch error between true haplotypes and the
haplotype reconstructions over all individuals for the baseline
methods.  For Yoruba and HaploDB, the reported numbers are averages 
over the 100 datasets.
\label{t:baseerrors}}
\begin{tabular}{l|r|r|r}
\toprule
\multicolumn{1}{c}{Method} & \multicolumn{1}{c}{Daly} & \multicolumn{1}{c}{Yoruba} & \multicolumn{1}{c}{HaploDB} \\
\midrule
\phase & 145 & 37.61 &  108.36\\
\fastphase & 105 & 45.87 &  110.45\\
\spamm & 127 & 54.69 &  120.29\\
\haplorec & 131 & 56.62 &  130.28\\
\hit & 121 & 73.23 &  123.95\\
\gerbil & 132 & 75.05 &  134.22\\
\bottomrule
\end{tabular}
\end{table}

\begin{table}
\centering
\caption{The total switch error between true haplotypes and the
haplotype reconstructions over all individuals for the haplotyper
selection methods for different combinations of baseline haplotypers.  
For Yoruba and HaploDB, the reported numbers are averages over the 100 datasets.
\label{t:selectcomberrors}}
\begin{tabular}{ll|r|r|r}
\toprule

\multicolumn{5}{c} {Haplotyper Selection} \\

\midrule
\multicolumn{1}{l}{Methods} &
\multicolumn{1}{l}{Distance} &
 \multicolumn{1}{c}{Daly} & 
 \multicolumn{1}{c}{Yoruba} &
 \multicolumn{1}{c}{HaploDB} \\

\midrule
all methods & $d_s$ & 103 & 37.67 & 103.43 \\
 & $d_{3-H}$ & 103 & 38.29  & 104.10 \\
& $d_{4-H}$ & 103 & 38.41  & 104.52 \\
 & $d_{5-H}$ & 105 & 38.35  & 104.76 \\
 & $d_H$ & 107 & 40.14  & 110.84 \\
\midrule

w/o \phase & $d_s$ & 106 & 43.58 & 107.16 \\
 & $d_{3-H}$ & 107 & 43.42 &  107.40 \\
 & $d_{4-H}$ & 109 & 43.99 &  108.55 \\
 & $d_{5-H}$ & 107 & 44.16 &  108.55 \\
 & $d_H$ & 102 & 48.36 &  117.00 \\

\midrule

w/o \fastphases & $d_s$ & 108 & 40.00 & 105.06 \\
 & $d_{3-H}$ & 105 & 40.13  & 105.74 \\
 & $d_{4-H}$ & 110 & 40.82  & 106.91 \\
 & $d_{5-H}$ & 114 & 41.59  & 107.32 \\
 & $d_H$ & 115 & 45.30  & 116.27 \\

\midrule

w/o \gerbil & $d_s$ & 103 & 38.47 & 103.91 \\
 & $d_{3-H}$ & 105 & 38.53  & 104.69 \\
 & $d_{4-H}$ & 104 & 38.98  & 105.52 \\
 & $d_{5-H}$ & 118 & 39.07  & 106.05 \\
 & $d_H$ & 113 & 42.46  & 111.57 \\

\midrule

w/o \phases, \fastphases, \gerbils & $d_s$ & 116 & 47.94 & 113.95 \\
 & $d_{3-H}$ & 108 & 47.87  & 114.39 \\
 & $d_{4-H}$ & 104 & 48.48  & 115.57 \\
 & $d_{5-H}$ & 116 & 48.66  & 116.57 \\
 & $d_H$ & 117 & 53.47  & 122.61 \\
\bottomrule
\end{tabular}
\end{table}

\begin{table}
\centering
\caption{The total switch error between true haplotypes and the
haplotype reconstructions over all individuals for the haplotyper
voting methods for different combinations of baseline haplotypers.  
For Yoruba and HaploDB, the reported numbers are averages over the 100 datasets.
\label{t:votecomberrors}}
\begin{tabular}{ll|r|r|r}
\toprule

\multicolumn{5}{c} {Haplotyper Voting} \\

\midrule
\multicolumn{1}{l}{Methods} &
\multicolumn{1}{l}{Distance} &
 \multicolumn{1}{c}{Daly} & 
 \multicolumn{1}{c}{Yoruba} &
 \multicolumn{1}{c}{HaploDB} \\

\midrule
all methods & $d_s$ & 104 & 39.86 & 103.06 \\
 & $d_{3-H}$ & 107 & 39.15 &  102.24 \\
 & $d_{4-H}$ & 107 & 40.08 &  104.00 \\
 & $d_{5-H}$ & 107 & 39.56 &  104.29 \\
 & $d_H$ & 106 & 51.07 & 134.16 \\
\midrule

w/o \phase & $d_s$ & 107 & 43.18 & 105.68 \\
 & $d_{3-H}$ & 107 & 43.15 & 106.41 \\
 & $d_{4-H}$ & 114 & 43.67 & 107.14 \\
 & $d_{5-H}$ & 107 & 44.14 & 107.67 \\
 & $d_H$ & 105 & 50.29 & 119.99\\ 
\midrule

w/o \fastphases & $d_s$ & 109 & 39.71 & 103.77 \\
 & $d_{3-H}$ & 107 & 39.92 & 104.26 \\
 & $d_{4-H}$ & 106 & 40.79 & 105.42 \\
 & $d_{5-H}$ & 112 & 41.34 & 105.78 \\
 & $d_H$ & 117 & 47.59 & 119.19 \\
\midrule

w/o \gerbil & $d_s$ & 105 & 38.27 & 102.76 \\
 & $d_{3-H}$ & 104 & 38.19  & 103.38 \\
 & $d_{4-H}$ & 104 & 38.70  & 104.39 \\
 & $d_{5-H}$ & 112 & 38.93  & 104.62 \\
 & $d_H$ & 110 & 43.91 & 114.93 \\
\midrule

w/o \phases, \fastphases, \gerbils & $d_s$ & 112 & 46.28 & 110.58 \\
 & $d_{3-H}$ & 109 & 46.58  & 110.99 \\
 & $d_{4-H}$ & 107 & 48.09 &  113.25 \\
 & $d_{5-H}$ & 111 & 48.60  & 113.91 \\
 & $d_H$ & 114 & 53.92 & 122.99 \\
\bottomrule
\end{tabular}
\end{table}

The results on the Yoruba datasets follow a similar pattern, except
that now \phase\---the baseline method with worst performance on the
Daly dataset---is the best on average.  The combination methods
provide solutions comparable to those of the best method (\phase) and better than 
those of any other baseline method.
When only subsets of the baseline methods are used, the
performance of the combination methods drops, but not significantly
unless \phase, \fastphase, and \gerbil\ are left out simultaneously.

On the HaploDB dataset the advantage of using combination 
methods is even more evident.  The best baseline method \phase\ is
clearly outperformed by all combination methods except voting and
selection with Hamming distance.  The results are only slightly
degraded when \phase, \fastphase, or \gerbil\ is left out of the
ensemble, and when they are all left out simultaneously, the performance
of the combination strategies is still significantly better than that 
of the best remaining baseline method \spamm.  

In summary, our results indicate that using the haplotyper combination 
approach sometimes significantly increases the haplotyping accuracy, and
never significantly decreases the accuracy in comparison to the best baseline 
method.  Of course, in practice the identity of the best baseline method is 
not known and changes from dataset to dataset.  In a more realistic 
comparison to the baseline method that does best on average on all the 
datasets (\fastphase), all the proposed haplotyper combination methods 
are clearly more accurate on average.  Hence, our experiments suggest that 
it is indeed better to combine the predictions of all the baseline methods 
than to (blindly) choose and use any one of them.

In general, combination methods using switch distance as the distance
function tend to produce most accurate results.  This suggests that
the errors of the baseline methods resemble random switches rather
than random single nucleotide mutations.  The performance of different
distance functions also depends on the density of the used marker map.
For dense marker maps, larger windows are beneficial, whereas
in sparse maps considering dependencies between consecutive
markers probably suffices. Furthermore, the selection methods seem to
perform slightly better than voting methods.  A potential explanation is
that the median haplotype reconstruction is more tolerant to random
errors in the baseline methods than the mean haplotype reconstruction.
Further analysis is needed in order to fully understand the differences 
between the combination methods, but it seems safe to conclude that 
haplotyper selection with switch distance is the best choice (among
combination methods and baseline methods) at least when no additional 
information abut the problem at hand is available.

The computational price for the potential improvements in accuracy is 
the added effort of first running all the baseline methods and
then solving the \HVP\ or \HSP\ problem.  This may be a problem if some 
of the baseline methods are very slow.  In such cases, we suggest the 
strategy of computing the predictions of as many baseline methods as 
time constraints permit, and combining the resulting reconstructions 
using one of the combination methods.  The running times of the baseline 
methods vary greatly, so running, e.g., all but the slowest baseline 
method may well be much more efficient than running the slowest method 
alone. 

A major computational difficulty with Hamming voting is that the basic
method for computing it scales exponentially in the number of
haplotype reconstructions per individual.  In
Section~\ref{s:hammingcomputation} we showed that if the number of
heterozygous markers is small or the relative Hamming score of the
solution is at most $1/2$ then the ordering of the haplotypes in the
pairs can be determined efficiently.  Figure~\ref{f:dalyHamming}
illustrates that in the Daly dataset most of the individuals have
either very small number of heterozygous markers or small relative
Hamming error.  This supports the hypothesis that the Hamming voting
problem can be solved sufficiently efficiently in practice even with a
larger number of baseline methods.

\begin{figure}
\includegraphics[width=\columnwidth]{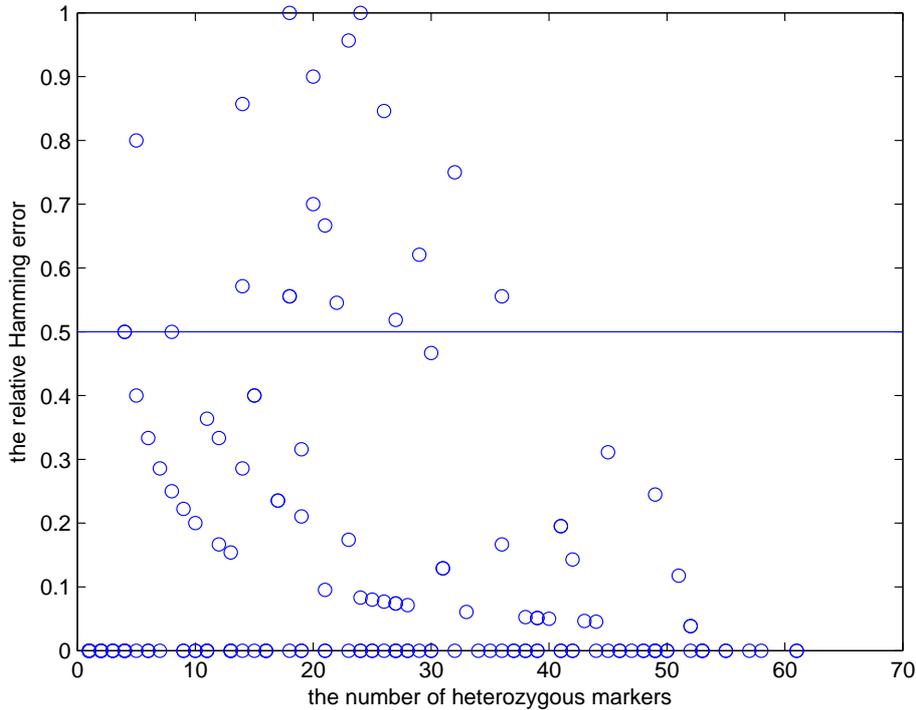}
\caption{The relative Hamming distance vs the number of heterozygous
markers in the Daly dataset.  The plot shows that the relative error
being higher than $1/2$ is mainly the problem of individuals with
small number of heterozygous markers. The fraction of individuals with
the relative hamming error at least $1/2$ is $20/174 \approx 11.5\%$.
 \label{f:dalyHamming}}
\end{figure}

Depending on the combination method there can be multiple solutions
that have the same score but different distance to the ground truth.
In Table~\ref{t:selectcomberrors} and Table~\ref{t:votecomberrors}
ties are broken by selecting the 
solution with optimal score that was found first.
The rule for breaking ties may
have a significant effect on the final accuracy:  For example, when 
applying switch voting to all the $6$ methods on the Daly dataset, 
there are a total of $35$ ties.  Thus, depending on how ties are
broken, the error may be anywhere between $89$ and $124$.  
Breaking the ties randomly results on average in $106.5$ errors, which is quite
close to the result $105$ obtained by our arbitrary tie breaking.
The situation with other combination methods and datasets is similar.
Better results might be obtained by more sophisticated tie breaking 
rules, e.g., by following the overall leader \fastphase\ in case of 
ties.  We leave the exploration of such advanced tie breaking 
rules for future work.

\begin{figure}
\includegraphics[width=\columnwidth]{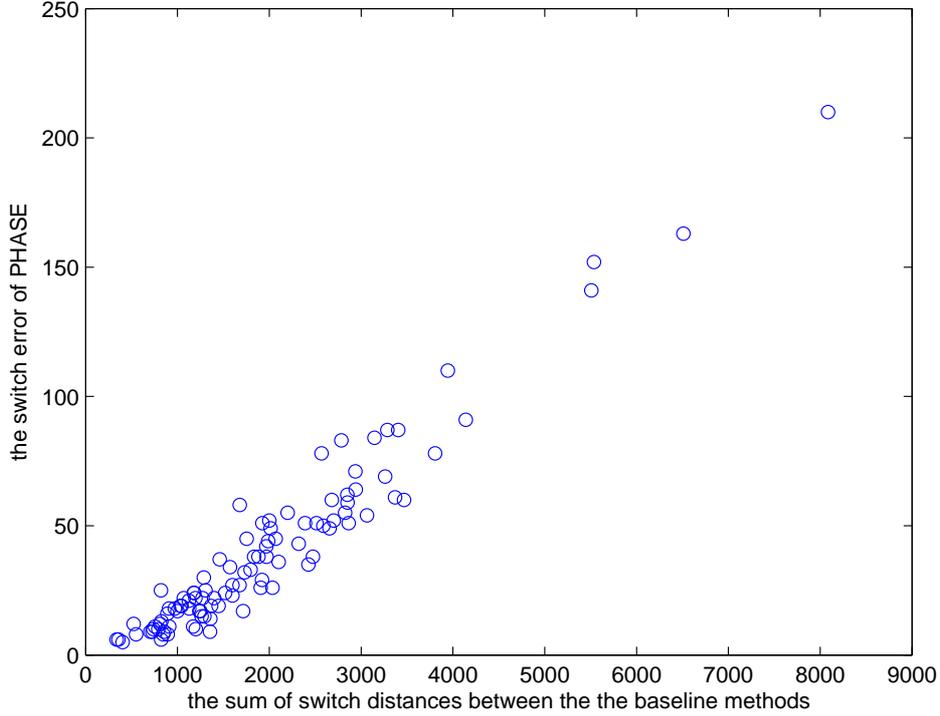}
\caption{The switch error of \phase\ vs the sum of the switch
distances between the baseline methods for Yoruba datasets.  Each
point corresponds to one of the Yoruba datasets, x-coordinate being the
sum of distances between the reconstructions obtained by the baseline
methods, and y-coordinate corresponding to the switch errors of the
reconstructions by \phase.
\label{f:distnswitch}}
\end{figure}

When the experimental results are analyzed in detail at the
level of individuals, it turns out that the combination methods tend
to fail only when many of the baseline methods perform rather badly.
Even though it is hard to recover the true haplotypes in such cases,
the fact that the baseline methods are in wide disagreement can be
used to identify such problematic individuals.  We have observed that
the sum of distances between the baseline haplotype reconstructions has
very high correlation (between $0.95$ and $0.99$) with the error in
the final reconstructions for the individual.  Figure~\ref{f:distnswitch}
illustrates this for \phase\ using the Yoruba datasets. 
This indicates that combining haplotypers can also be a strong method 
for outlier detection, which is helpful for removing probably incorrectly
haplotyped individuals from further consideration.

\section{Conclusions}
Haplotype reconstruction is an important intermediate task in the
study of genetic variations in human populations.  Various techniques
to reconstruct haplotypes from measurable genotype data have been
proposed.  Different methods typically return substantially different
reconstructions, and there seems to be no method that is generally
superior on all datasets.


To overcome these difficulties, we have studied the problem of
combining haplotypers in order to improve the haplotype
reconstructions.  More specifically, we have considered two variants
of the problem: haplotyper voting, where the goal is to find a
consensus haplotype reconstruction given multiple haplotype
reconstructions, and haplotyper selection, where the goal is to find
the best haplotype reconstruction for each individual.  We have
developed algorithms for using various internal distance functions.
The experiments show that combining haplotypers provides improvements
over the average performance of the haplotype reconstruction methods,
and the reconstruction quality is even comparable to or better than
the best method for each dataset.  Hence, using the combination
methods virtually never degrades the performance, and sometimes gives
clear advantages on accuracy in comparison to the best baseline
method.

According to the experiments, haplotyper selection with switch distance 
is consistently close to the best combination method.  Thus, the original 
problem of having to choose a baseline methods has not been lifted to
an analogous problem of choosing a combination method as haplotyper
selection with switch distance seems to be a good choice always.

Combining haplotypers opens many avenues to improved techniques for
haplotype reconstruction.  First, an obvious direction of refinements
would be to use more complex combinator functions.  For example, there
could be a-priori knowledge about the performance of the methods on
some part of the data or on other, similar datasets.  This knowledge
could be used to reweight methods in the combination algorithms, or
pursue more complex prediction approaches such as decision trees or
support vector machines.  Such approaches would be especially useful
for combining a large number of reconstructions of varying quality.
Second, the different methods could be used to guide haplotype
reconstruction techniques, e.g., to detect potentially problematic
regions of the data where the reconstruction model should be refined.
Third, assessing the quality of haplotype reconstructions by combining
haplotypers is a promising direction.  Often it will be acceptable to
discard part of the reconstructed haplotypes or markers to avoid
errors.  Our preliminary results suggest that multiple haplotype
reconstructions could be used to detect individuals which are likely
to be haplotyped erroneously and even problematic regions of the
marker map.  Fourth, more refined measures of the reconstruction
quality are also of interest, for example modeling the dependence
structure of the markers in more detail or taking genetic background
information into account.  Finally, we intend to evaluate the approach
with further genotype datasets with known haplotypes as they become
available.

\bibliographystyle{alpha}
\bibliography{hapcomb}

\label{lastpage}

\end{document}